\documentclass{siamart190516} 

\usepackage{hyperref}
\usepackage{url}
\usepackage{graphicx}

\newcommand{\abs}[1]{\left|#1\right|}

\newcommand{\ts}{\textsuperscript}


\usepackage{amsmath,amsfonts,bm}









\def\eqref#1{equation~\ref{#1}}









\def\floor#1{\lfloor #1 \rfloor}
\def\1{\bm{1}}










\DeclareMathAlphabet{\mathsfit}{\encodingdefault}{\sfdefault}{m}{sl}
\SetMathAlphabet{\mathsfit}{bold}{\encodingdefault}{\sfdefault}{bx}{n}




\def\sK{{\mathbb{K}}}
\def\sL{{\mathbb{L}}}

\def\sN{{\mathbb{N}}}

\def\sU{{\mathbb{U}}}
\def\sV{{\mathbb{V}}}

\def\sX{{\mathbb{X}}}








\newcommand{\R}{\mathbb{R}}



\date{}

\title{Arbitrary-Depth Universal Approximation Theorems for Operator Neural Networks}


\author{Annan Yu\thanks{Center for Applied Mathematics, Cornell University, Ithaca, NY 14853. (\email{ay262@cornell.edu})} \and Chlo\'{e} Becquey\thanks{University of Connecticut (\email{chloe.becquey@uconn.edu})} \and Diana Halikias\thanks{Mathematics Department, Cornell University, Ithaca, NY 14853 (\email{dh736@cornell.edu})} \and Matthew Esmaili\\ Mallory\thanks{University of California, Los Angeles (\email{matthewmallory@g.ucla.edu})} \and Alex Townsend\thanks{Department of Mathematics, Cornell University, Ithaca, NY  14853. (\email{townsend@cornell.edu})}}

%


\begin{document}

\maketitle

\begin{abstract}
The standard Universal Approximation Theorem for operator neural networks (NNs) holds for arbitrary width and bounded depth. Here, we prove that operator NNs of bounded width and arbitrary depth are universal approximators for continuous nonlinear operators. In our main result, we prove that for non-polynomial activation functions that are continuously differentiable at a point with a nonzero derivative, one can construct an operator NN of width five, whose inputs are real numbers with finite decimal representations, that is arbitrarily close to any given continuous nonlinear operator. We derive an analogous result for non-affine polynomial activation functions. We also show that depth has theoretical advantages by constructing operator ReLU NNs of depth $2k^3+8$ and constant width that cannot be well-approximated by any operator ReLU NN of depth $k$, unless its width is exponential in $k$. 
\end{abstract}

\section{Introduction}
In the approximation theory of neural networks (NNs), universal approximation theorems (UATs) are statements that establish the density of a class of NNs within a space of mappings. Thus, UATs imply that NNs represent a wide variety of mappings when given appropriate weights and biases. A NN is characterized by its activation function (e.g., ReLU, sigmoid), connectivity (e.g., feedforward, recurrent), width (number of neurons per layer), and depth (number of layers). Operator NNs are a family of NNs for approximating nonlinear operators~\cite{chen,kovachki2021,lu2021learning}. These are critical for learning dynamical systems using DeepONets~\cite{cai2021deepm,lanthaler2021,lu2019deeponet}, inverse mapping problems~\cite{adler2017solving}, and functional data analysis~\cite{rossi2005representation}.  UATs for operator NNs are a fundamental theoretical underpinning for such applications. While there are UATs for wide, shallow operator NNs~\cite{chen}, we derive the first set of UATs for their deep, narrow counterparts. These results are key to understanding the expressibility of deep operator NNs.

There are well-established theoretical advantages of deep, narrow NNs over wide, shallow ones in terms of expressibility. In particular, there are $3$-layer NNs representing radial functions on $\mathbb{R} ^d$ that cannot be approximated by a 2-layer NN to more than a constant accuracy, unless its width is exponential in $d$, where a NN's depth is the number of hidden layers plus one output layer~\cite{eldan2016power}. Moreover, for any $k \in \mathbb Z$, there are $\Theta(k^3)$-deep NNs of constant width which, when restricted to the unit cube $[0, 1]^d$, cannot be approximated by a NN with $\mathcal{O}(k)$ depth, unless it has $\Omega(2^k)$ width~\cite{Telgarsky}. In Section~\ref{approxpowersect}, we construct operators that require exponentially wide ReLU NNs, analogous to~\cite{Telgarsky}. Hence, the improved expressibility of deep, narrow operator NNs over shallow, wide ones is similar to standard NNs. UATs for deep, narrow operator NNs are thus needed to establish their approximation power. 

In Section~\ref{constructsect}, we prove that an operator NN of arbitrary depth and constant width is a universal approximator of nonlinear continuous operators if the activation function is continuously differentiable at a point with nonzero derivative. Our key insight is to use input encoding and reduction of truncated values to decrease the width of the NN to a constant. We thus propagate inputs from one layer to the next with a single neuron. We truncate inputs to a number of digits based on a precision $\varepsilon>0$ and concatenate the truncated values into one value. We extract each truncated input with a decoder function, which we approximate with an arbitrarily deep NN as described in~\cite{kidger}. A related approach is used in~\cite{Shenfixed,Shen3layer}, where inputs are used in their encoded forms. However, we extract the original value from its encoding. 

Our work builds on well-established UATs. An early UAT by~\cite{pinkus} states that an arbitrarily wide one-layer NN with a continuous non-polynomial activation function can approximate all continuous functions on compact sets.
\cite{kidger} prove a UAT for arbitrarily deep NNs with $n$ inputs, $m$ outputs, and of width $n + m + 2$.
An arbitrarily wide operator NNs UAT is given in~\cite{chen}. They prove the standard UAT with a fixed set of weights and biases, then give an arbitrary-width UAT for nonlinear continuous functionals and operators. Though~\cite{kidger} turns the arbitrary-width UAT into one of arbitrary depth, their technique does not extend to operator NNs. In particular, it requires the width of the NN to depend on the size of the sampling device, which depends on the precision $\varepsilon$. Consequently, we would have $n + m(\varepsilon) + 5$ neurons in every hidden layer of the operator NN. This is impractical, since in most cases $m(\varepsilon) \to \infty$ as $\varepsilon \to 0$, resulting in a NN that is both deep and wide. In contrast, our result only requires a constant width operator NN.   

The above UATs all utilize the multi-layer feedforward perceptron (MLP) model. Given an input vector $\mathbf x \in \R^{k_0}$ and activation function $\sigma: \R\to\R$, the output $\varphi(\mathbf x) \in \R^{k_{N}}$ is calculated as
\begin{align*}
    \varphi(\mathbf x) = \mathbf W_N\sigma\left(\mathbf W_{N-1}\left(\cdots\left(\mathbf W_2\sigma\left(\mathbf W_1\mathbf x + \bm\theta_1\right) + \bm\theta_2\right)+\cdots\right) + \bm\theta_{N-1}\right) + \bm\theta_N,
\end{align*}
where $k_0, \hdots, k_{N} \in \mathbb N$, with weights $\mathbf W_{i} \in \R^{k_{i}\times k_{i-1}}$ and biases $\bm\theta_i \in \R^{k_i}$. Here, 
$\sigma$ is applied entry-wise to the vector, i.e., $\sigma(\mathbf a)_j = \sigma( a_{j})$. UATs concern the density of the following space: 
\[
\mathcal M(\sigma) := \text{span}\{\sigma(\mathbf w^\top \mathbf x - \theta)\mid \theta \in \R, \mathbf w \in \R^n\},
\]
where $n$ is the input dimension. We say $\sigma$ has the \textit{density property} if $\mathcal{M}(\sigma)$ is dense in $\mathcal C(\R^n)$ equipped with the topology of uniform convergence on compact sets. The definition of the density property is independent of the dimension $n$ of the input space. Moreover, all continuous, non-polynomial activation functions have the density property~\cite{pinkus}.

\paragraph{Main Contributions.}
We show that deep, narrow NNs are better than shallow, wide ones at approximating certain continuous nonlinear operators, in the sense that significantly fewer neurons are needed to achieve the same accuracy (see Theorem~\ref{telgarsky}). We also show that after truncating inputs, deep NNs of width five can be used to uniformly approximate continuous real-valued functions on compact sets, regardless of the domain's dimension (see Theorems~\ref{thm.truncateUAT} \&~\ref{thm.truncateUAT2}). Finally, we give the first arbitrary-depth UAT for operators with a general class of activation functions (see Theorem~\ref{thm.main}).

\section{Advantages of Depth for Operator Neural Networks}\label{approxpowersect}

There are many advantages of deep, narrow NNs over wide, shallow ones. In particular, some functions are computable by a NN with two hidden layers but require exponentially many neurons of a NN with one hidden layer~\cite{eldan2016power}. This demonstrates the expressive power of deep NNs.  However, this result is achieved by considering the $L^2$ distance between two functions on the entirety of $\R^d$. As our main results are concerned with approximating continuous functions and operators on compact sets, we prove the following more powerful result than~\cite{eldan2016power} for the operator case, inspired by  Theorem~1.1 of~\cite{Telgarsky}.

\begin{theorem}\label{telgarsky}
Let $\mathbb X$ be a Banach space, $\mathbb K_1 \subseteq \mathbb X$ be compact, and $\mathbb V \subseteq \mathcal C(\mathbb K_1)$ be compact. Then, for any integers $n, k \geq 1$, there exists a nonlinear continuous operator $G_k : \mathbb V \to \mathcal C\left([0,1]^n\right)$ such that
\begin{enumerate}
    \item There is a ReLU NN $\varphi : [0, 1]^n \to \R$ of depth $2k^3 + 8$ and width in $\Theta(1)$ such that $\varphi(\mathbf y) = G_k(u)(\mathbf y)$, for any $u \in \mathbb V$ and $\mathbf y \in [0,1]^n$.
    \item Let $m \geq 1$ be an integer. Let $\psi: [0,1]^{n+m} \to \R$ be a ReLU NN with $n + m$ inputs, depth $\leq k$, and $\leq 2^k$ total nodes. Then for any prescribed $x_1, \hdots, x_m \in \mathbb K_1$ and $u \in \mathbb V$, we have 
    \begin{align*}
        \int_{[0,1]^d}\left|G_k(u)(\mathbf y) - \psi\left(u(x_1), \hdots, u(x_m), \mathbf{y}\right)\right|\,d\mathbf y \geq \frac{1}{64}.
    \end{align*}
\end{enumerate}
\end{theorem}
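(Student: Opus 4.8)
The plan is to reduce the operator statement to the existing function-approximation lower bound of Telgarsky (Theorem~1.1 of~\cite{Telgarsky}), which exhibits, for each $k$, a function $f_k : [0,1]^n \to \R$ that is computed by a ReLU network of depth $\Theta(k^3)$ and constant width, but which cannot be approximated in $L^1([0,1]^n)$ to accuracy better than a universal constant by any ReLU network of depth $\le k$ with $\le 2^k$ nodes. The key observation is that such an $f_k$ depends only on the evaluation point $\mathbf y$, not on the input function $u$; so I would simply define $G_k$ to be the \emph{constant operator} $G_k(u) = f_k$ for every $u \in \mathbb V$. This is trivially a continuous nonlinear operator from $\mathbb V$ into $\mathcal C([0,1]^n)$ (continuity and even nonlinearity of $f_k$ as a function of $\mathbf y$ is what makes $G_k$ a legitimate target; as a map of $u$ it is constant, hence continuous). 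Part~1 is then immediate: the depth-$(2k^3+8)$, constant-width ReLU network $\varphi$ realizing $f_k$ — which exists by Telgarsky's construction, after absorbing the explicit depth bookkeeping into the stated $2k^3+8$ — satisfies $\varphi(\mathbf y) = f_k(\mathbf y) = G_k(u)(\mathbf y)$ for all $u$ and all $\mathbf y$.

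For Part~2, fix any candidate network $\psi : [0,1]^{n+m} \to \R$ of depth $\le k$ with $\le 2^k$ nodes, and fix sampling points $x_1,\dots,x_m \in \mathbb K_1$ and an input function $u \in \mathbb V$. Freezing the first $m$ coordinates at the constants $c_i := u(x_i)$ produces a new map $\mathbf y \mapsto \psi(c_1,\dots,c_m,\mathbf y)$ from $[0,1]^n$ to $\R$. I would argue this restricted map is itself computed by a ReLU network of depth $\le k$ and no more nodes than $\psi$: substituting constants into the first layer only alters the bias vector of that layer, so the architecture (hence depth and node count) is unchanged. Telgarsky's lower bound then applies verbatim to this restricted network and to $f_k$, giving
\begin{align*}
    \int_{[0,1]^n}\bigl|f_k(\mathbf y) - \psi(c_1,\dots,c_m,\mathbf y)\bigr|\,d\mathbf y \;\ge\; \frac{1}{64}.
\end{align*}
Since $f_k(\mathbf y) = G_k(u)(\mathbf y)$ and $c_i = u(x_i)$, this is exactly the claimed inequality (with the harmless typo $d$ in the stated integral being $n$).

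The only genuine content beyond this reduction is importing Telgarsky's result with the precise constants $2k^3+8$, width $\Theta(1)$, the threshold $2^k$ on total nodes, and the floor $1/64$ on the $L^1$ error; I expect the main (minor) obstacle to be the bookkeeping that matches Telgarsky's triangle-wave "sawtooth" construction — where $f_k$ is a composition of roughly $k^3$ copies of a fixed constant-width gadget — to the additive constant $8$ and to confirm that restricting coordinates does not degrade his hypotheses. A secondary point worth stating explicitly is that although $G_k$ is constant in $u$, the theorem is still meaningful: it shows a shallow operator network cannot approximate $G_k$ \emph{regardless of how it uses the samples $u(x_1),\dots,u(x_m)$}, because no shallow ReLU network in the $\mathbf y$ variable — even one allowed to depend arbitrarily on those samples — can approximate $f_k$. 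Everything else is a direct quotation of~\cite{Telgarsky}, so no further estimates are needed.
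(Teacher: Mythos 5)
Your proposal is correct and matches the paper's proof essentially step for step: both take $G_k$ to be the constant operator sending every $u$ to Telgarsky's hard function, and both prove Part~2 by absorbing the frozen samples $u(x_1),\dots,u(x_m)$ into the first-layer bias so that the restricted map $\mathbf y \mapsto \psi(u(x_1),\dots,u(x_m),\mathbf y)$ is a ReLU network of the same depth and node count, to which Theorem~1.1 of~\cite{Telgarsky} applies directly. No gaps; your observation that the stated $d$ in the integral should be $n$ is also consistent with the paper's intent.
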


\begin{proof}
Let $k\geq1$ and $\varphi: [0,1]^n \to \R$ be the ReLU NN constructed in  Theorem~1.1 of~\cite{Telgarsky} with depth $2k^3 + 8$ and width in $\Theta(1)$. The first statement of our theorem follows from considering the constant operator $G_k: \mathbb V \to \mathcal C\left([0,1]^n\right)$, $u \mapsto \varphi$. To prove 2, let $\psi: [0,1]^{n+m}\to\R$ be any ReLU NN of depth $\leq k$ with $\leq 2^k$ total nodes. Let $x_1, \hdots, x_m \in \mathbb K_1$ be the prescribed sampling device and $u \in \mathbb V$. Then, define $\psi_u$ as follows:
\begin{align*}
    \psi_u : [0,1]^n \to \R,\ \ \ \mathbf y \mapsto \psi\left(u(x_1), \hdots, u(x_m), \mathbf y\right).
\end{align*}
Since $u(x_1), \hdots, u(x_m)$ can be added onto the first layer's bias term, $\psi_u$ is a NN with $n$ inputs, $\leq k$ layers, and $\leq 2^k$ total nodes. The second statement of the theorem holds by  Theorem~1.1 of~\cite{Telgarsky}, as the ReLU activation function is a $(1,1,1)$-semi-algebraic gate.
\end{proof}

Theorem~\ref{telgarsky} illustrates that increasing the depth of a NN can make operator approximation much less expensive. This suggests that UATs for deep operator NNs comprise an important contribution to our understanding of the limitations of deep learning and expressibility of nonlinear operators.

\section{Construction of the Deep Narrow Operator Neural Network}\label{constructsect}

We present two results on the existence of a deep NN approximation of a nonlinear continuous operator. One is an explicit reconnection of an existing wide NN and the other is an abstract existence argument. In this section, $\sX$ is a Banach space, and $\sK_1 \subset \sX$ is compact. Let $\sV \subset \mathcal{C}(\sK_1) := \mathcal{C}(\sK_1, \R)$ be compact in $\mathcal{C}(\sK_1)$, which is equipped with the topology induced by the uniform norm. Suppose that $n \in \mathbb{N}$, $\sK_2 \subset \R^n$ is compact, and $G: \mathbb V \rightarrow \mathcal{C}(\sK_2)$ is a nonlinear continuous operator. In~\cite{chen}, it is shown that $G$ can be uniformly approximated by a 4-layer NN if the activation function has the density property. More precisely, given any $\varepsilon > 0$, there are positive integers $M, N, m \in \sN$, real numbers $c_i^k, \zeta_k, \xi_{ij}^k \in \R$, vectors $\boldsymbol\omega_k \in \R^n$, and sensors $x_j \in \sK_1$ such that 
\[
\abs{G(u)(\mathbf{y}) - \sum_{k=1}^N\hspace{-1pt}\left[\sum_{i=1}^M c_i^k \sigma \!\!\left( \sum_{j=1}^m\xi_{ij}^ku(x_j) + \theta_i^k \right)\right]\hspace{-3pt}\sigma(\boldsymbol\omega_k \cdot \mathbf{y} + \zeta_k)} < \varepsilon,
\]
for all $u \in \sV$ and $\mathbf{y} \in \sK_2$. The architecture of this NN is shown in Figure~\ref{fig:chen_chen} (left). The input layer consists of $\mathbf{y} = (y_1, \ldots, y_n)$ and $(u_1, \ldots, u_m) = (u(x_1), \ldots, u(x_m))$. The second layer computes $p_{i}^{k} = \sigma \!\!\left( \sum_{j=1}^m\xi_{ij}^ku(x_j) + \theta_i^k \right)$. The third layer computes $r^k = \sigma(\boldsymbol\omega_k \cdot \mathbf{y} + \zeta_k)$ and $q^k = \sum_{i=1}^M c_i^k p_i^k$. The fourth layer consists of multiplication neurons that compute $s^k = r^kq^k$ for $k = 1, \ldots, N$, whose sum is the output of the NN. 

\subsection{Register-Compute Neural Networks}

In a fully connected feedforward NN, connections between non-consecutive layers are not allowed. Such NNs are ``memoryless,'' as a neuron in the $j$\ts{th} layer receives no input other than the output from the $(j-1)$\ts{th} layer. One can introduce memory into a NN by showing that a neuron with a particular activation function, weights, and bias can uniformly approximate the identity function on a compact set \cite{kidger}. We use such neurons to propagate the inputs through the layers of our NN to use them in later computations. This motivates the following definition of the basic model in our construction.

\begin{definition}
Let $p, q \in \mathbb{N}$. A $(p, q)$-\textit{register-compute NN} is a fully connected feedforward NN with $p+q$ neurons in each hidden layer. In each layer, $p$ neurons are called \textit{registers}, ordered so that the only nonzero weight in the $j$\ts{th} register of layer $i$ is from the output of the $j$\ts{th} register of layer $i-1$.
\end{definition}

Although all pairs of neurons in consecutive layers are connected in a fully connected feedforward NN, we effectively ``disconnect'' non-corresponding registers by setting the weights to be zero.

\subsection{Constructing a Deep Operator NN: Reconnecting the Wide Operator NN}

\begin{figure}
    \centering
    \begin{minipage}{.59\textwidth}
    \includegraphics[scale=.60]{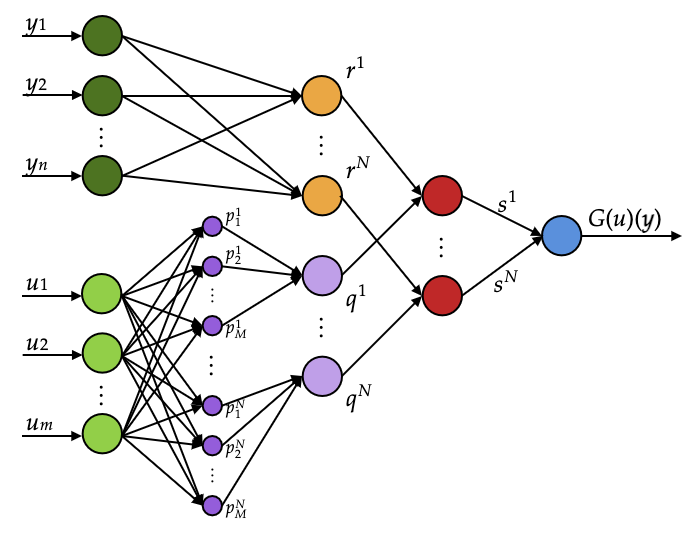}
     \end{minipage}
    \begin{minipage}{.39\textwidth}
    \includegraphics[scale=.55]{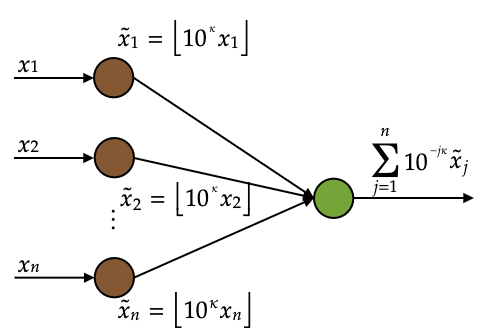}
    \end{minipage}
    \caption{Left: The wide operator NN from~\cite{chen}. Right: Our encoder model.}
    \label{fig:chen_chen}
\end{figure}

We observe that neurons in each hidden layer of the operator NN in~\cite{chen} can be moved one-by-one into different hidden layers. Moreover,   if $\sigma$ has the properties in Theorem~\ref{thm.deepopnet1}, then a $\sigma$-activated neuron can be used to uniformly approximate the identity map $\iota_\sK$ on any compact set $\sK \subset \mathbb R$ (Lemma~4.1 of~\cite{kidger}). This allows us to propagate inputs from one layer to the next. By rearranging the neurons in the shallow, wide operator NN, we get a deep NN whose width depends only on the size of the input layer. 


\begin{theorem}\label{thm.deepopnet1}
Let $\sigma: \R \rightarrow \R$ have the density property. Suppose that $\sigma$ is also continuously differentiable at one or more points with a nonzero derivative. Then, for any $\varepsilon > 0$, there exists a function $F: \R^{m+n} \rightarrow \R$ represented by a $\sigma$-activated NN of width at most $m+n+5$ such that
\[
\abs{G(u)(\mathbf{y}) - F(u(x_1), \ldots, u(x_m), \mathbf{y})} < \varepsilon
\]
for all $u \in \sV$ and $\mathbf{y} \in \sK_2$. Moreover, if a $\sigma$-activated NN of width 3 and depth $L$ approximates the multiplication map $(a, b) \mapsto ab$ on any compact set up to any uniform error, then the network $F$ has depth in $\mathcal{O}((M+L)N)$, where $M, N, m, \{x_j\}_{j=1}^m$ are as in  Theorem~5 of~\cite{chen}.
\end{theorem}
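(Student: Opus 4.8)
The plan is to start from the explicit 4-layer wide operator NN of~\cite{chen} displayed above and surgically redistribute its neurons across many layers, using \emph{register} neurons to carry forward whatever inputs and partial sums are needed downstream. First I would set up a $(p,q)$-register-compute NN whose $p$ registers are reserved for the $m+n$ raw inputs $u(x_1),\dots,u(x_m),y_1,\dots,y_n$ together with an accumulator for the running sum $\sum_{k'} s^{k'}$ that forms the output; since Lemma~4.1 of~\cite{kidger} guarantees a single $\sigma$-activated neuron uniformly approximates the identity $\iota_\sK$ on any compact $\sK\subset\R$ (this is exactly where the hypothesis that $\sigma$ is $C^1$ at a point with nonzero derivative is used), each register can be realized by one neuron per layer at the cost of an arbitrarily small propagated error. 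The key bookkeeping point is that $p$ need only be $m+n+1$ and the number of \emph{compute} neurons $q$ active at any one time can be kept down to a small constant (this is what yields width $m+n+5$, not $m+n+O(1)$ with a large constant).

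Next I would process the $N$ outer terms $s^k = r^k q^k$ one at a time. For a fixed $k$: using the registered copies of $u(x_1),\dots,u(x_m)$, compute the $M$ inner units $p_i^k = \sigma(\sum_j \xi_{ij}^k u(x_j)+\theta_i^k)$ sequentially, accumulating $q^k = \sum_i c_i^k p_i^k$ in one compute neuron over roughly $M$ layers (each layer: one neuron forms the next $p_i^k$, one neuron updates the partial sum, a handful of registers carry the inputs and the accumulator), then form $r^k = \sigma(\boldsymbol\omega_k\cdot\mathbf y+\zeta_k)$ from the registered $\mathbf y$; then invoke the hypothesized width-3, depth-$L$ subnetwork approximating $(a,b)\mapsto ab$ to produce (an approximation of) $s^k = r^k q^k$, and add it into the output accumulator. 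Handling all $k$ costs $O((M+L)N)$ layers, while the width in each layer stays bounded by the $m+n+1$ registers plus a constant number of active compute/multiplication neurons — giving width at most $m+n+5$ and depth in $O((M+L)N)$, exactly as claimed. Finally, one chooses all the identity-approximation and multiplication-approximation tolerances small enough (there are finitely many, $O((M+L)N)$ of them, and the target operator's bound is attained on a compact set so everything is uniformly Lipschitz on the relevant ranges) so that the accumulated error plus the $\varepsilon/2$ from the~\cite{chen} approximation is below $\varepsilon$.

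The main obstacle is the error-propagation analysis: the registered inputs are only approximately forwarded, and each approximate value feeds into later nonlinear computations ($\sigma(\cdot)$, multiplications), so errors can amplify through the $O((M+L)N)$ layers. To control this I would work on a fixed compact box containing all intermediate quantities (possible because $\sV$ and $\sK_2$ are compact and $G$ is continuous, so $u(x_j)$, $\mathbf y$, the $p_i^k$, $q^k$, $r^k$, $s^k$ and their partial sums all range over compacta independent of the approximation parameters), use that $\sigma$, the linear combinations, and the multiplication map are Lipschitz there, and then pick the per-neuron tolerances in reverse topological order (innermost/last layers first) so that the total propagated error telescopes to something $<\varepsilon/2$. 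A secondary technical point is verifying that one genuinely never needs more than a constant number of simultaneously-live compute neurons — i.e. that the "one $p_i^k$ at a time" scheduling really does keep the extra width at $5$ rather than something larger; this is a careful but routine accounting of which neuron in layer $i$ reads from which neuron in layer $i-1$, consistent with the register-compute definition.
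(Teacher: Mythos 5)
Your proposal is correct and follows essentially the same route as the paper's proof: rearrange the wide operator NN of \cite{chen} into a register-compute network, propagate the $m+n$ inputs and a running output accumulator with identity-approximating $\sigma$-neurons (Lemma~4.1 of \cite{kidger}), sequentially form the $p_i^k$, accumulate $q^k$, form $r^k$, approximate each product $r^kq^k$ with a bounded-width subnetwork over $\mathcal{O}(L)$ layers, and balance the finitely many propagation/multiplication errors against the $\varepsilon$-budget. The only minor point worth tightening is that for the unconditional width claim the multiplication approximator should come from an existing result such as Proposition~4.9 of \cite{kidger} (as the paper does), since the width-3, depth-$L$ multiplication network is only a hypothesis for the ``moreover'' depth estimate.
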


\begin{proof}
Let $H: \R^{m+n} \rightarrow \R$ be the function given by the NN in  Theorem~5 of~\cite{chen} that approximates the operator $G$ to within $\varepsilon / 5$. We construct an $(m+n,5)$-register-compute NN $F$ with $m+n$ inputs, one output, and $(M+L+1)N+1$ layers, where $L$ is a positive integer defined later in~\eqref{eq.L}. Among the 5 neurons that are not registers in each hidden layer, 1 neuron is referred to as the \textit{output augmenter}. 2 neurons are referred to as the \textit{adder} 1 and the \textit{adder} 2, respectively, and the remaining 2 neurons are referred to as the \textit{computation neurons}.

The $m+n$ input layer values are passed into the corresponding $m+n$ registers in the first hidden layer. A register that receives a value $u(x_j)$ is called a $u$-register. If this register is in the $k$th hidden layer, then we denote its output by $u_j^k$. A $y$-register and its output $y_j^k$ are similarly defined. We also define $u_j^0 = u(x_j)$ and $y_j^0 = y_j$. Up to a small error so that $\varepsilon_4$ in~\eqref{eq.3} satisfies $\abs{\varepsilon_4} < \varepsilon / 5$ for all $u \in \sV, \mathbf{y} \in \sK_2$, each register computes a function that is close to the identity function $\iota_\sL$ in $L^\infty(\sL)$, where $\sL$ is the range of the output of the previous register. 

We further divide the $(M+L+1)N$ hidden layers into $N$ sections of $M+L+1$ layers. In the $k$th section, the $i$th adder 1 in each layer computes:
\[
p_i^k = \sigma\!\!\left(\sum_{j=1}^m\xi^k_{ij}u_j^{(M+L+1)(k-1)+i-1}+\theta^k_j\right), \qquad 1\leq i\leq M
\]
using the outputs of the $u$-registers $u_j^{(M+L+1)(k-1)+i-1}$ from the previous hidden layer.

Let $\tilde{q}_i^k = c_i^kp_{i-1}^k + q_{i-1}^k$, where $q_{i-1}^k$ is the output of the $(i-1)$th adder 2 in the $k$th section. We set $p_0^k = q_0^k = 0$. The affine transformation of the $i$th adder 2 in $k$th section computes $\tilde{q}_i^k$ and, together with the activation function, propagates $\tilde{q}_i^k$ using the identity approximation mentioned above, up to a small error so that $\varepsilon_3$ in~\eqref{eq.1} satisfies $\abs{\varepsilon_3} < \varepsilon / 5$ for all $u \in \sV, \mathbf{y} \in \sK_2$. The output is then denoted by $q_i^k$.

The $(M+1)$th adder 1 in the $k$th section computes:
$$r^k = \sigma(\boldsymbol\omega_k \cdot \mathbf{y}^{(M+L+1)(k-1)+M} + \zeta_k)$$
using the outputs of the $y$-registers of the previous hidden layer:
$$\mathbf{y}^{(M+L+1)(k-1)+M} = ({y}^{(M+L+1)(k-1)+M}_1, \ldots, {y}^{(M+L+1)(k-1)+M}_n),$$
The $(M+1)$th adder 2 in the $k$th section propagates $q_M^k$, and its output is denoted by $q^k$.

In the next $L_k$ layers, we can use adder 1, adder 2, and the 2 computation neurons to approximate $r^kq^k$ up to an error so that $\varepsilon_2$ in~\eqref{eq.1} satisfies $\abs{\varepsilon_2} < \varepsilon / 5$ for all $u \in \sV, \mathbf{y} \in \sK_2$ (Proposition~4.9 of \cite{kidger}). This number, denoted by $s^k$, is then added to the output augmenter which, unless otherwise stated, propagates the value from the previous layer, up to an error so that $\varepsilon_1$ in~\eqref{eq.1} satisfies $\abs{\varepsilon_1} < \varepsilon / 5$ for all $u \in \sV, \mathbf{y} \in \sK_2$. The initial value in the output augmenter is set to 0.

We set 
\begin{equation}
    L := \max_{1 \leq k \leq N} L_k. \tag{a}\label{eq.L}
\end{equation}
We assume that any neurons from layer $L_k+1$ to $L$ in the $k$th section do nothing but propagate the values from the previous hidden layer. Once the $N$th section is computed, we add $s^N$ to the augmenter. Now, the value of the augmenter in the $((M+L)N+1)$th layer is given by $S_{u, \mathbf{y}}$, where
\begin{align}
    S_{u, \mathbf{y}} &= \sum_{k=1}^N s^k + \varepsilon_1 = \sum_{k=1}^N q^k r^k  + \varepsilon_1 + \varepsilon_2 = \sum_{k=1}^N \left(\sum_{i=1}^M c_i^kp_{i-1}^k\right) r^k  + \varepsilon_1 + \varepsilon_2 + \varepsilon_3 \label{eq.1} \\ 
    &= \sum_{k=1}^N \left[\sum_{i=1}^M c_i^k \sigma\left(\sum_{j=1}^m\xi^k_{ij}u_j^{\ell(k)+i-1}+\theta^k_j\right)\right] g(\boldsymbol\omega_k \cdot \mathbf{y}^{\ell(k)+M} + \zeta_k) + \varepsilon_1 + \varepsilon_2 + \varepsilon_3 \nonumber \\
    &= \sum_{k=1}^N \left[\sum_{i=1}^M c_i^k \sigma\left(\sum_{j=1}^m\xi^k_{ij}u(x_j)+\theta^k_j\right)\right] g(\boldsymbol\omega_k \cdot \mathbf{y} + \zeta_k) + \varepsilon_1 + \varepsilon_2 + \varepsilon_3 + \varepsilon_4 \label{eq.3} \\ 
    &= H(u(x_1), \ldots, u(x_m), \mathbf{y}) + \varepsilon_1 + \varepsilon_2 + \varepsilon_3 + \varepsilon_4,
\end{align}
where $\ell(k) = (M+L+1)(k-1)$. Since $\abs{\varepsilon_j} < \varepsilon / 5$ for $j = 1, 2, 3, 4$, we have 
\begin{align*}
    \abs{G(u)(\mathbf{y}) - S_{u, \mathbf{y}}}
    &\leq \abs{G(u)(\mathbf{y}) - H(u(x_1), \ldots, u(x_j), \mathbf{y})} \\&\qquad\qquad\qquad+ \abs{(H(u(x_1), \ldots, u(x_j), \mathbf{y}) - S_{u,\mathbf{y}}} \\
    &\leq \frac{\varepsilon}{5} + \abs{\varepsilon_1} + \abs{\varepsilon_2} +\abs{\varepsilon_3} +\abs{\varepsilon_4} < \varepsilon
\end{align*}
for all $u \in \sV, \mathbf{y} \in \sK_2$. The result follows as $S_{u, \mathbf{y}} = F(u(x_1), \ldots, u(x_m), \mathbf{y})$.
\end{proof}

\begin{figure}
    \centering
    \includegraphics[width=\textwidth]{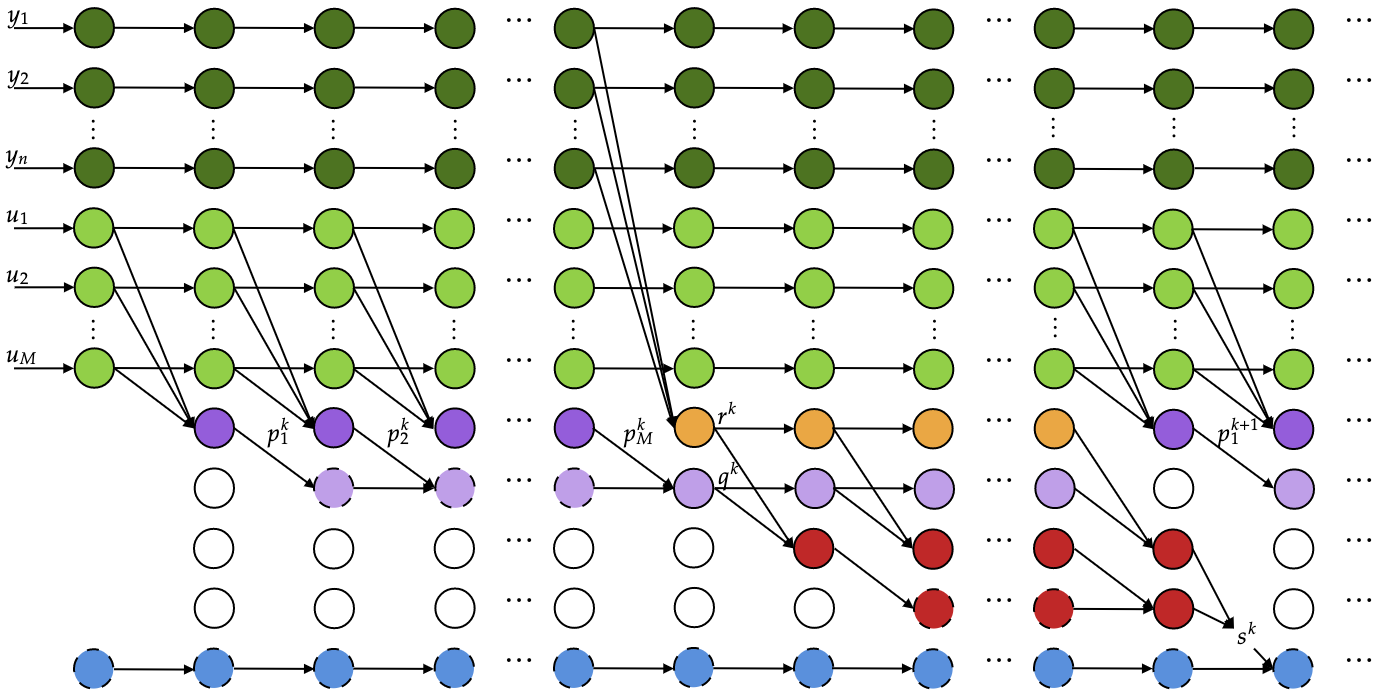}
    \caption{A portion of the deep NN developed by transforming the wide operator NN from~\cite{chen} into a register model. The process can be found in~\cite{kidger}.}
    \label{fig:chenRegister}
\end{figure}
Theorem~\ref{thm.deepopnet1} is a theoretical guarantee that a deep NN can approximate the operator $G$. In particular, the width of our NN does not depend on $M$ and $N$ in  Theorem~5 in~\cite{chen}, where these parameters are obtained abstractly and do not have intuitive interpretations.

Theorem~\ref{thm.deepopnet1} has two shortcomings. First, the total number of neurons in the deep operator NN in Theorem~\ref{thm.deepopnet1} is $\Omega((m+n)(M+L)N)$, whereas that of a shallow, wide NN in~\cite{chen} is $\mathcal{O}(m+n+MN)$. We emphasize, however, that the NN in Theorem~\ref{thm.deepopnet1} is not necessarily the simplest one to achieve an $\varepsilon$-approximation. In fact, deep NNs can outperform the shallow ones in approximating certain operators (see Section~\ref{approxpowersect}).

Second, the NN's width depends on $m$, which in turn depends on $\varepsilon$. Thus, while we have eliminated the dependence of the width on $M$ and $N$, the number of sensors is reflected in the width, and a large sampling device is needed to achieve an accurate approximation, making the NN both arbitrarily deep and arbitrarily wide. To address this, we have two avenues. First, we find a relationship between $m$ and $\varepsilon$. Proving some rate of growth of $m$ with respect to $\varepsilon$ would make Theorem~\ref{thm.deepopnet1} more informative, as in~\cite{lu2019deeponet}, for example. However, results of this type are in a more specific context, and a relationship between $m$ and $\varepsilon$ in the general setting is challenging. Also, to prevent the width of our deep NN from growing too fast as $\varepsilon \rightarrow 0$, we would like to have $m(\varepsilon) = \mathcal{O}(\log(1/\varepsilon))$. So far, we are not aware of any existing result that demonstrates that $m(\varepsilon) = \mathcal{O}(\log(1/\varepsilon))$ is possible. Instead, we need to find a way to reduce the width of the network, regardless of the number of inputs.

\subsection{Input Encoding and Reduction}


It has been shown that arbitrarily-deep NNs of width $m+3$ can uniformly approximate functions $f: \sK \rightarrow \R$, where $\sK \subset \R^m$ is a compact set \cite{kidger}.
We further reduce this width to a constant, eliminating the dependence on $m$. However, it is known that for certain activation functions, the width $m$ is not enough to uniformly approximate continuous functions on compact sets~\cite{hanin2017approximating, lu2017expressive}. Therefore, we slightly modify the architecture of the NNs to make them more flexible.

In~\cite{kidger}, $m$ inputs are propagated throughout the entire network, which requires $m$ neurons in each hidden layer. Our trick is to truncate the inputs using the floor function and then encode them into a single neuron. This single neuron is then propagated using only one neuron from one hidden layer to the next and is decoded when necessary. When decoding, the inputs are decoded one-by-one, and then we immediately pass the decoded value into the computation neurons.

We first define terminology for truncating and encoding inputs. A \textit{truncation neuron} takes an input $x$ and produces the output $\floor{10^\kappa x}$, where $\kappa$ is an arbitrary integer. A \textit{NN with truncated inputs} is a NN where every input has been passed through a truncation neuron. The \textit{width} of the NN with truncated inputs is the size of the largest hidden layer, ignoring the truncation neurons applied immediately to the input layer. Therefore, a $(p, q)$-\textit{register-compute} NN with truncated inputs is a NN with truncated inputs that is a $(p, q)$-register-compute NN if the outputs of the truncation layer are viewed as the inputs of the NN. A $\sigma$-\textit{activated} NN with truncated inputs is a NN whose neurons in all hidden layers have $\sigma$ as the activation function, ignoring the truncation neurons applied immediately to the input layer.

\begin{figure}
    \centering
    \includegraphics[width=\textwidth]{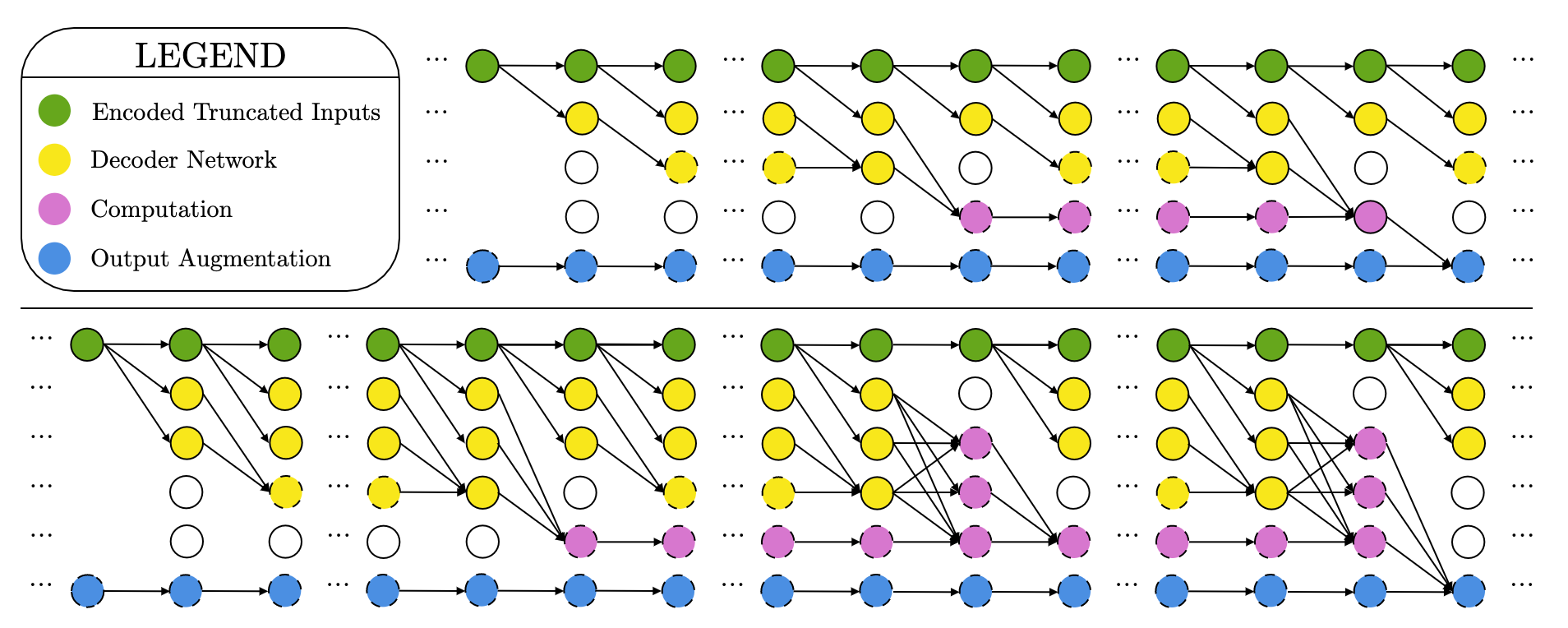}
    \caption{Our decoder model for non-polynomial $\sigma$ (top) and polynomial $\sigma$ (bottom).}
    \label{fig:decoderNonpoly}
\end{figure}

\begin{theorem}\label{thm.truncateUAT}
Suppose $\sigma$ is non-polynomial and continuously differentiable at one or more points with nonzero derivative. Let $\sK \subset \R^n$ be a compact set, and let $f: \sK \rightarrow \R$ be continuous. For each $\varepsilon > 0$, there is a function $g: \R^n \rightarrow \R$ represented by a $\sigma$-activated NN with truncated inputs of width 5 such that 
\[
\abs{f(\mathbf{x}) - g(\mathbf{x})} < \varepsilon, \qquad \mathbf{x} \in \sK.
\]
\end{theorem}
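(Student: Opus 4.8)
The plan is to run the strategy of Theorem~\ref{thm.deepopnet1}, but to collapse the input width to a constant by \emph{encoding} all truncated inputs into a single neuron and decoding them one coordinate at a time. First, since $\sK$ is compact we may fix $R$ with $\sK\subseteq[-R,R]^n$, extend $f$ continuously to $\hat f$ on $[-R,R]^n$ (Tietze), and use uniform continuity of $\hat f$ to pick an integer $\kappa$ so large that the coordinatewise truncation $\tilde x_i:=10^{-\kappa}\floor{10^{\kappa}x_i}$ satisfies $\abs{f(\mathbf x)-\hat f(\tilde{\mathbf x})}<\varepsilon/3$ for all $\mathbf x\in\sK$. The truncation neurons output the integers $a_i:=\floor{10^{\kappa}x_i}$, which lie in the fixed finite set $\{-C,\dots,C\}$ with $C=\lceil 10^{\kappa}R\rceil$; after the shift $a_i\mapsto a_i+C$ each has a $D$-digit base-$10$ expansion for a fixed $D$. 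Hence $E(a_1,\dots,a_n):=\sum_{i=1}^{n}(a_i+C)\,10^{D(i-1)}$ is an \emph{affine} function of the truncation outputs, taking finitely many integer values $N_1<\dots<N_T$, and each $a_i$ is recovered from $N=E(\mathbf a)$ by a digit-extraction map $\mathrm{dec}_i$.

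Next I would lay out the network around five neurons per hidden layer. One neuron, the \emph{input register}, is loaded in the first hidden layer with the affine quantity $E(a_1,\dots,a_n)$ and then carried forward unchanged by repeatedly approximating the identity on a compact interval, which is possible because $\sigma$ is $C^1$ at a point with nonzero derivative (Lemma~4.1 of~\cite{kidger}). A second neuron, the \emph{output register}, is initialized to $0$ and propagated the same way; it accumulates the target sum. By the density property pick $c_\ell,\theta_\ell\in\R$ and $\mathbf w_\ell\in\R^n$ with $\abs{\hat f(\mathbf z)-\sum_{\ell=1}^{L}c_\ell\sigma(\mathbf w_\ell\cdot\mathbf z-\theta_\ell)}<\varepsilon/3$ on $[-R,R]^n$. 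The network processes $\ell=1,\dots,L$ in turn: a third neuron, the \emph{dot-product accumulator}, is reset to $0$, and then for $i=1,\dots,n$ the remaining two computation neurons run a deep narrow subnetwork reading the input register that approximates a continuous interpolant of $\mathrm{dec}_i$ through the points $(N_t,\mathrm{dec}_i(N_t))$ to any prescribed accuracy (the $\R\to\R$ case of~\cite{kidger}); exploiting that $\mathrm{dec}_i(N)$ is itself a finite sum $\sum_{\ell'}c'_{\ell',i}\sigma(w'_{\ell',i}N+b'_{\ell',i})$, one may fold the scaling directly into the accumulator, so that no separate neuron for $a_i$ is needed and the two computation neurons suffice for all $i$ and all $\ell$. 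After the $n$th coordinate the accumulator holds $\approx\mathbf w_\ell\cdot\tilde{\mathbf x}$; one further layer applies $\sigma$ and the next adds $c_\ell$ times the result to the output register. After $\ell=L$ the output register holds $g(\mathbf x)=\sum_{\ell=1}^{L}c_\ell\sigma(\mathbf w_\ell\cdot\tilde{\mathbf x}-\theta_\ell)$ up to the accumulated internal error, and the output layer emits it. Since $\kappa$, $L$, $n$ and the decoder depths are all fixed, the total number of hidden layers is finite, and every intermediate quantity ranges over a fixed compact set; thus the per-layer identity errors and the decoder accuracy can be made small enough that $\abs{g(\mathbf x)-\sum_{\ell}c_\ell\sigma(\mathbf w_\ell\cdot\tilde{\mathbf x}-\theta_\ell)}<\varepsilon/3$ uniformly on $\sK$, using uniform continuity of $\sigma$ and of the fixed affine maps on those compact sets. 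Combining the three $\varepsilon/3$ bounds gives $\abs{f(\mathbf x)-g(\mathbf x)}<\varepsilon$ on $\sK$, with $g$ a $\sigma$-activated NN with truncated inputs of width~$5$ (and depth growing with $n$ and $1/\varepsilon$).

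The main obstacle is the encode/decode pair. The encoding must be cheap enough that storing and propagating it costs no extra width; making it affine achieves this, at the price of encoded values as large as $10^{Dn}$, which is harmless because all the identity- and decoder-approximations merely need to hold on the (fixed, if large) compact interval $[0,\max_t N_t]$. Decoding, on the other hand, is morally a floor/modulo operation and therefore discontinuous — the key observation that makes the whole scheme work is that $\mathrm{dec}_i$ only has to be reproduced at the finitely many integer points $N_t$, so \emph{any} continuous interpolant through those points serves, and such an interpolant is approximable by an arbitrarily deep width-$3$ $\sigma$-network. The remaining difficulty is purely organizational: fitting the reset–decode–accumulate loop, together with the input register and the output register, into exactly five neurons, which is what forces the two computation neurons to be reused for every coordinate $i$ and every outer index $\ell$ and motivates the flattening of $\mathrm{dec}_i$ described above.
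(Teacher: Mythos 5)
Your proposal is correct in substance and rests on the same central trick as the paper's proof: truncate the inputs, encode all truncated coordinates affinely into a single register value, propagate that value (and a running output) with approximate-identity neurons, and decode one coordinate at a time with two extra $\sigma$-neurons, so that the width collapses to $5$ independently of $n$. Where you diverge is in the scaffolding around that trick. The paper does not go back to the shallow Pinkus representation $\sum_\ell c_\ell\sigma(\mathbf w_\ell\cdot\mathbf x-\theta_\ell)$ at all: it takes the identity-activated $(n,2)$-register-compute network $h$ of \cite{kidger} that already approximates $f$ with width $n+2$, restructures it so each computation neuron reads only one coordinate per layer, and then replaces the $n$ input registers by one encoded register plus two decoder neurons; your construction instead unrolls the shallow sum by hand with a dot-product accumulator, which amounts to re-deriving the relevant special case of the Kidger--Lyons register model (your input register, output register, accumulator and two computation neurons play exactly the roles of their register, augmentation and computation neurons plus the paper's two decoder neurons). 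The second difference is the decoder: the paper defines $\varphi_j$ to be \emph{constant} on a small interval around each admissible code and extends by Tietze, so that the drift caused by the approximate-identity register leaves the decoded value exactly unchanged ($\varphi_j(\tilde a)\equiv\varphi_j(a)$); you interpolate through the finitely many integer codes $N_t$ and control drift by uniform continuity of the fixed interpolant, which also works but puts the robustness burden on the error bookkeeping rather than on the design of $\varphi_j$. Two small points to tighten: your claim that $\mathrm{dec}_i$ ``is itself a finite sum $\sum_{\ell'}c'_{\ell',i}\sigma(w'_{\ell',i}N+b'_{\ell',i})$'' should read ``is uniformly approximable by such a sum'' (density of $\mathcal M(\sigma)$ gives approximation, not exact representation, and approximation suffices); and your shallow approximation must hold on a slightly enlarged cube so that the truncated points $\tilde{\mathbf x}$ lie in its domain. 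Neither affects the validity of the argument, and your route has the merit of being more self-contained, at the cost of redoing bookkeeping that the paper outsources to \cite{kidger}.
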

\begin{proof}
Without loss of generality, we let $\sK \subset (0,1)^n$. Otherwise, we scale and translate the domain with the truncation neuron and bias terms in the first hidden layer. There is an identity-activated $(n, 2)$-register-compute NN, $h: \R^n \rightarrow \R$, such that $\abs{h(\mathbf{x}) - f(\mathbf{x})} < \varepsilon/3$ for all $\mathbf{x} \in \sK$~\cite{kidger}. Each of the $n$ inputs is passed into a unique register in the first hidden layer, then propagated by the corresponding register in each hidden layer. Among the two remaining non-register neurons in each hidden layer, one neuron is the \textit{computation neuron}, which applies an affine transformation to the outputs of the registers in the previous hidden layer. The other neuron is the \textit{augmentation neuron}, which sums the outputs of the computation neuron and the augmentation neuron in the previous layer. The output of the first augmentation neuron is set to zero.

Since the NN $h$ is identity-activated, it can be restructured so that each computation neuron only reads one input from the registers and its own output from the previous layer, and applies an affine transformation. To see this, let $\mathbf{x} \mapsto \sum_{j=1}^\ell w_j x_j + b$ be a computation neuron. We replace the layer of this neuron by $\ell$ layers and use the computation neuron from each of the $\ell$ layers to compute $w_1x_1, w_1x_1+w_2x_2, \ldots, \sum_{j=1}^{\ell-1} w_jx_j, \sum_{j=1}^{\ell} w_jx_j + b$, respectively. Each of the remaining neurons in the $\ell$ layers applies the identity to the corresponding output from the previous layer. Let $L+1$ be the depth of this restructured NN.

We now show how we can store input approximations in a single neuron. For large $\kappa \in \mathbb{N}$, we let $\tilde{x}_j = \floor{10^\kappa x_j}$ for $1\leq j\leq n$ be the truncated inputs. The register in the first hidden layer computes 
\[
r := \sum_{j=1}^n 10^{-j\kappa}\tilde{x}_j = 10^{-\kappa }\tilde{x}_1 + 10^{-2\kappa }\tilde{x}_2 + \cdots + 10^{-n\kappa }\tilde{x}_n,
\]
where the remaining registers take the input $r$ and pass it as the output.  Now, we define a series of \textit{decoder functions}, $\varphi_1, \ldots, \varphi_n$. Every $a = 10^{-n\kappa}M \in [0,1), M \in \mathbb{N}_0$ can be expanded uniquely as 
\[
a = a_1 10^{-\kappa} + a_2 10^{-2\kappa} + \cdots + a_m 10^{-n\kappa},
\]
where $a_1, \dots , a_m$ are integers in $[0, 10^\kappa]$. We set $\varphi_j([a-10^{-n\kappa-1}, a+10^{-n\kappa-1}]) := 10^{-n\kappa}a_j$ for each $a$ and then extend $\varphi_j$ to the interval $[0,1]$ continuously by the Tietze Extension Theorem.

Now, we construct a $(1, 4)$-register-compute NN with truncated inputs. We let $p: \R^n \rightarrow \R$ be the function it represents. Unlike most fully connected feedforward NNs, the neurons in each layer have different activation functions. The register uses the identity activation function. Among the four remaining neurons in each hidden layer, one neuron is called the \textit{computation neuron}, and one neuron is called the \textit{augmentation neuron}, which uses the identity activation function. The remaining two neurons are called the \textit{decoder neurons}, which use $\sigma$ as the activation function.

Let $i \in \{1, \ldots, L\}$. In the NN $h$, by assumption, only one of $x_1, \dots, x_n$, say $x_j$, is read by the computation neuron in the $i$th layer.  We construct the NN $p$ by building $L+1$ chunks, where the last chunk is the output layer. To construct the $i$th chunk, we use the two decoder neurons from each hidden layer together with the register to approximate $\varphi_j(r)$ up to a small error, as in Proposition~4.9 of~\cite{kidger}. We note that $\varphi_j(r) = 10^{-\kappa}\tilde{x}_j \approx x_j$. This decoded value is then passed into the computation neuron for the affine transformation done at the $i$th layer in the NN $h$.

Compared to $h$, the difference in the output of $p$ is induced by two steps: the truncating $x$ to obtain $10^{-\kappa}\tilde{x}$, and decoding to obtain an approximation of $\phi_j(r) = 10^{-\kappa}\tilde{x}$. The first error can be made arbitrarily small by taking $\kappa$ large enough and the second error can also be made arbitrarily small as in the previous paragraph. Thus, we can construct the NN $p$ so that $\abs{p(\mathbf{x}) - h(\mathbf{x})} < \varepsilon / 3$ for $\mathbf{x} \in \sK$.

It remains to construct a NN with truncated inputs that only uses $\sigma$ as the activation function. To do so, we define a NN $g: \R^n \rightarrow \R$ whose architecture completely inherits that of $p$, except the registers, computation neurons, and augmentation neurons are $\sigma$-activated. As before, we may use a $\sigma$-activated neuron to mimic the identity activation function. For the register, we make the approximate identity accurate enough so that the perturbed value of $a$, denoted by $\tilde{a}$, always satisfies $\abs{a - \tilde{a}} < 10^{-n\kappa-1}$. Hence, we have that $\varphi_j(\tilde{a}) \equiv \varphi_j(a)$ throughout the entire NN. Since the values in the computation neurons and the augmentation neurons can be propagated arbitrarily accurately, we have $\abs{g(\mathbf{x}) - p(\mathbf{x})}< \varepsilon / 3$ for $\mathbf{x} \in \sK$.  Therefore, we have 
\[
\abs{g(\mathbf{x}) - f(\mathbf{x})} \leq \abs{g(\mathbf{x}) - p(\mathbf{x})} + \abs{p(\mathbf{x}) - h(\mathbf{x})} + \abs{h(\mathbf{x}) - f(\mathbf{x})} < \varepsilon, \qquad \mathbf{x} \in \sK.
\]
\end{proof}
Theorem~\ref{thm.truncateUAT} shows that truncating inputs allows any continuous function on a compact set to be uniformly approximated by deep NNs of constant width. This independence of width and dimension overcomes the problematic growth of the size of the sampling device in~\cite{lu2019deeponet}. Since non-affine polynomial activation functions satisfy the arbitrary-depth UAT, we obtain Theorem~\ref{thm.truncateUAT2}. Analogous to Theorem~\ref{thm.truncateUAT}, which extends Proposition~4.9 of~\cite{kidger}, Theorem~\ref{thm.truncateUAT2} naturally extends Proposition~4.11 of~\cite{kidger}. As opposed to wide NNs, deep NNs with (non-affine) polynomial activation functions approximate continuous functions nicely.

\begin{theorem}\label{thm.truncateUAT2}
Let $\sK \subset \R^n$ be a compact set and $f: \sK \rightarrow \R$ be a continuous function. For each $\varepsilon > 0$, there is a function $g: \R^n \rightarrow \R$ represented by a $\sigma$-activated NN with truncated inputs of width 6 such that 
\[
\abs{f(\mathbf{x}) - g(\mathbf{x})} < \varepsilon, \qquad \mathbf{x} \in \sK.
\]
\end{theorem}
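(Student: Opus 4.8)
The plan is to mirror the proof of Theorem~\ref{thm.truncateUAT} essentially verbatim, with the only substantive change being the replacement of the identity-approximation and decoding sub-networks by their polynomial analogues. First I would invoke the arbitrary-depth UAT for non-affine polynomial activations: by Proposition~4.11 of~\cite{kidger}, for any compact $\sK \subset \R^n$ and continuous $f$ there is an identity-activated (or, in the polynomial case, a suitable-width) register-compute NN $h$ with $\abs{h(\mathbf{x}) - f(\mathbf{x})} < \varepsilon/3$ on $\sK$. As before, after restructuring so each computation neuron reads only a single input at a time, write $L+1$ for the depth of $h$. The point where width $6$ rather than $5$ enters is precisely here: a non-affine polynomial of degree $d\geq 2$ requires \emph{two} extra neurons per layer (rather than one) to mimic the identity map and to synthesize the decoder functions, as in the bottom half of Figure~\ref{fig:decoderNonpoly}; this is exactly the width discrepancy between Proposition~4.9 and Proposition~4.11 of~\cite{kidger}.

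Next I would carry over the encoding step unchanged: set $\tilde{x}_j = \floor{10^\kappa x_j}$, let the first register compute $r = \sum_{j=1}^n 10^{-j\kappa}\tilde{x}_j$, and propagate $r$ through a single register in each layer (the register may now be $\sigma$-activated using a polynomial approximate-identity, which is where one of the two ``extra'' neurons is consumed — or, more cleanly, keep the bookkeeping as in Theorem~\ref{thm.truncateUAT} with a dedicated computation/augmentation pair and a pair of decoder neurons, and absorb the polynomial identity-mimicking overhead into an extra decoder neuron). Define the decoder functions $\varphi_1, \dots, \varphi_n$ exactly as before via the Tietze extension, so that $\varphi_j(r) = 10^{-\kappa}\tilde{x}_j \approx x_j$. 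Then build the NN $p$ chunk by chunk: in the $i$th chunk, if the $i$th computation neuron of $h$ reads $x_j$, use the decoder neurons (now $\sigma$-activated with $\sigma$ polynomial, applying Proposition~4.11 in place of Proposition~4.9) together with the register to approximate $\varphi_j(r)$, feed the decoded value into the computation neuron, and proceed as in $h$. The two error terms — truncation error, controlled by taking $\kappa$ large, and decoding error, controlled by the polynomial approximation — are each made $<\varepsilon/3$, giving $\abs{p(\mathbf{x}) - h(\mathbf{x})} < \varepsilon/3$, and hence $\abs{g(\mathbf{x}) - f(\mathbf{x})} < \varepsilon$ on $\sK$ by the triangle inequality.

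The main obstacle I anticipate is the width accounting rather than any genuinely new idea: one must verify that, for a non-affine polynomial $\sigma$, all of (i) mimicking the identity on the registers and augmentation neuron, (ii) performing the affine computations of $h$, and (iii) synthesizing and evaluating the decoder $\varphi_j$ can be done simultaneously within six neurons per hidden layer. The subtlety is that a polynomial activation cannot exactly reproduce an affine map, so every ``pass-through'' channel costs something; the reason six suffices (and five does not, in general) is that Proposition~4.11 of~\cite{kidger} already establishes that deep polynomial networks of width $n+4$ are universal on $\R^n$ — two more than the $n+3$ of the non-polynomial case — and the encoding trick collapses the $n$-dimensional register block to a single channel in both cases, so the constant widths differ by the same $+1$, i.e.\ $5$ versus $6$. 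I would make this precise by exhibiting the layer layout explicitly (one register, one computation neuron, one augmentation neuron, three decoder/identity-helper neurons) and citing Proposition~4.11 for each sub-block, but would not grind through the polynomial-interpolation estimates, as they are routine given the cited results.
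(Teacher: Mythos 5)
Your overall strategy (truncate, encode into one register, decode with Tietze-extended $\varphi_j$, mirror Theorem~\ref{thm.truncateUAT}) is the same as the paper's, but the part that actually carries the content of this theorem --- why exactly six neurons per hidden layer suffice for a non-affine polynomial $\sigma$ --- is not established by your argument, and your proposed layer layout is not the right one. The paper first replaces $f$ by a polynomial $p=\sum_j p_j$ (Weierstrass, error $\varepsilon/3$) and takes $h$ to be an $(n,4)$-register-compute NN in which \emph{three} $\sigma$-activated computation neurons are needed per layer to synthesize the monomials $p_j$ (Propositions~4.6 and~4.11 of~\cite{kidger}; with a polynomial activation, multiplication/identity-mimicking cannot be done by a single affine-reading neuron), plus one augmentation neuron. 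After encoding, the naive count is $1$ register $+$ $3$ decoder neurons $+$ $3$ computation neurons $+$ $1$ augmentation neuron $=8$, and the key observation that brings this down to $6$ is structural: whenever the computation neurons need to read a decoded value from the register, the outputs of two of the computation neurons in the previous layer are not consumed in the current layer, so those two neurons can be reused to implement the decoder, giving $3+3-2=4$ neurons for decoder-plus-computation and total width $1+1+4=6$. Your proposal contains no analogue of this reuse argument.

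Instead, your width justification rests on two claims that do not hold up. First, the layout you say you would exhibit (one register, one computation neuron, one augmentation neuron, three decoder/helper neurons) presupposes that a single polynomial-activated computation neuron can carry out the computations of $h$; it cannot, which is precisely why the polynomial base network in~\cite{kidger} (and in the paper) uses a three-neuron computation unit, and why the count starts at $8$ rather than $6$. Second, the heuristic that ``the encoding trick collapses the $n$-dimensional register block to a single channel in both cases, so the constant widths differ by the same $+1$'' is a non sequitur (and the supporting arithmetic ``$n+4$ is two more than $n+3$'' is itself off): the final constant is determined by how the decoder neurons interact with the computation unit after encoding, not by the widths of the un-encoded networks, and without the reuse observation the argument you sketch only yields width $8$, not $6$. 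So the missing idea is exactly the sharing of two computation neurons with the decoder when decoding is active; with it, the rest of your outline goes through as in the paper.
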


\begin{proof}

Without loss of generality, let $\sK \subset (0,1)^n$. Let $p = \sum_{j=1}^\ell p_j: \sK \rightarrow \R$ be a polynomial with monomials $p_j$ such that 
$\abs{f(\mathbf{x}) - p(\mathbf{x})}< \varepsilon / 3$ for $\mathbf{x} \in \sK$. There is a $(n,4)$-register-compute NN $h: \R^n \rightarrow \R$ satisfying $\abs{h(\mathbf{x}) - p(\mathbf{x})} < \varepsilon / 3$ for $\mathbf{x} \in \sK$, where each hidden layer contains $n$ identity-activated registers that propagate the $n$ inputs, one $\sigma$-activated augmentation neuron that stores the output and never takes any register as an input, and three $\sigma$-activated computation neurons that compute the monomials $p_j$ (Proposition~4.6 and Proposition~4.11 in~\cite{kidger}).

The computation neurons take no more than one value from the registers as the input in each layer. Moreover, when these neurons need inputs from one of the registers, the outputs of all but possibly one in the previous layer do not become the input of any other neurons in the current layer. 

As in the proof of the Theorem~\ref{thm.truncateUAT}, we can construct a $\sigma$-activated NN $g: \R^n \rightarrow \R$ with truncated input of width 8 such that $\abs{h(\mathbf{x}) - g(\mathbf{x})} < \varepsilon / 3$, for $\mathbf{x} \in \sK$. In particular, each hidden layer contains one register that propagates the encoded input as in the proof of Theorem~\ref{thm.truncateUAT}, three decoder neurons that, together with the register, approximate the decoders $\varphi_1, \ldots, \varphi_n$ (Proposition~4.11 in~\cite{kidger}), 3 computations neurons as in $h$, and 1 augmentation neuron as in $h$.

Finally, when a decoder neuron is activated and its output becomes an input to the next layer, the computation neurons read the input from the register. However, when the computation neurons read inputs from the register, the outputs of two of them in the previous layer are not used in the current layer. Thus, two computation neurons can be reused in the architecture of the decoder. Hence, only $3+3-2=4$ neurons are used to implement the decoder and the computation unit, and consequently, $g$ can be realized by a NN with truncated inputs of width $6$. Now, for all $\mathbf{x} \in \sK$, we have
\[
\abs{g(\mathbf{x}) - f(\mathbf{x})} \leq \abs{g(\mathbf{x}) - h(\mathbf{x})} + \abs{h(\mathbf{x}) - p(\mathbf{x})} + \abs{p(\mathbf{x}) - f(\mathbf{x})} < \varepsilon.
\]
\end{proof}

We note that the success of the encoder/decoder does not depend on the representation being decimal. They can be equivalently constructed using binary representations of numbers, so that if the operation in the truncation neuron is $x \mapsto \floor{2^\kappa x}$, Theorem~\ref{thm.truncateUAT} and Theorem~\ref{thm.truncateUAT2} still hold. This result is more relevant to most modern machines, which store floating-point numbers with finitely many bits, conduct floating-point arithmetic in binary, and perform $x \mapsto 2^\kappa x$ easily.

\subsection{Constructing the Second Deep Operator NN: an Abstract Approach}
Now, we have the tools to eliminate the dependence of the NN's width on the size of the sampling device. We adopt an abstract strategy to construct an operator NN with truncation whose width is a constant. To do so, we view the NN in~\cite{chen} as a function $f$ from $\R^{m+n}$ to $\R$, for we encode the input function $u$ as $m$ values. Therefore, to approximate the operator $G$, it suffices to approximate $f$ uniformly.

\begin{theorem}\label{thm.main}
Let $\sigma$ be non-polynomial (resp. non-affine), continuously differentiable at one or more points with nonzero derivative. Then, for every $\varepsilon > 0$, there are points $x_1, \ldots, x_m \in \sK_1$ and a function $F: \mathbb{R}^{m+n} \rightarrow \R$ given by a $\sigma$-activated NN with truncated inputs of width 5 (resp. 6), such that 
\[
\abs{G(u)(\mathbf{y}) - F(u(x_1), \ldots, u(x_m), \mathbf{y})} < \varepsilon
\]
for all $u \in \sV$ and $\mathbf{y} \in \sK_2$. Moreover, $m$ is independent of $\sigma$.
\end{theorem}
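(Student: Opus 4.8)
The plan is to reduce Theorem~\ref{thm.main} to Theorem~\ref{thm.truncateUAT} (for non-polynomial $\sigma$) and Theorem~\ref{thm.truncateUAT2} (for non-affine polynomial $\sigma$) via the ``encode the input function'' idea: once $G$ has been replaced by an honest continuous function of $m+n$ real variables, those two theorems finish the job. First I would carry out the \emph{sensor-reduction} step of~\cite{chen}: using only that $\sV$ is compact in $\mathcal{C}(\sK_1)$, hence equicontinuous and bounded, a sufficiently fine $\delta$-net $x_1,\ldots,x_m$ of $\sK_1$ together with a subordinate partition of unity yields a continuous (in fact bounded linear) reconstruction map $T\colon\R^m\to\mathcal{C}(\sK_1)$ with $\sup_{u\in\sV}\norm{T(u(x_1),\ldots,u(x_m))-u}_\infty$ as small as we like; composing with (a continuous extension of) $G$ produces a continuous function $f\colon\R^{m+n}\to\R$, $f(\mathbf{a},\mathbf{y})=G(T(\mathbf{a}))(\mathbf{y})$, with
\[
\abs{G(u)(\mathbf{y}) - f(u(x_1),\ldots,u(x_m),\mathbf{y})} < \varepsilon/2, \qquad u\in\sV,\ \mathbf{y}\in\sK_2.
\]
Since this step never refers to $\sigma$, the integer $m$ and the sensors $x_j$ depend only on $\varepsilon$, $G$, and $\sV$, which is precisely the closing assertion of the theorem. (One could equally take $f$ to be the shallow wide operator network of Theorem~5 of~\cite{chen} itself, but then the $\sigma$-independence of $m$ would have to be extracted from that proof rather than being immediate.)

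For the second step, let $E\colon\mathcal{C}(\sK_1)\to\R^m$ be the evaluation map $E(u)=(u(x_1),\ldots,u(x_m))$, which is continuous, so $E(\sV)$ is compact and $\sK:=E(\sV)\times\sK_2\subset\R^{m+n}$ is compact. Applying Theorem~\ref{thm.truncateUAT} (resp.\ Theorem~\ref{thm.truncateUAT2}) to the continuous function $f|_\sK$ produces $F\colon\R^{m+n}\to\R$ represented by a $\sigma$-activated NN with truncated inputs of width $5$ (resp.\ $6$) with $\abs{F-f}<\varepsilon/2$ on $\sK$; the crucial point being exploited is that the width furnished by those theorems is a constant that does \emph{not} grow with the ambient dimension $m+n$, because all $m+n$ truncated inputs are packed into a single register. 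The triangle inequality on $\sK$ then gives $\abs{G(u)(\mathbf{y})-F(u(x_1),\ldots,u(x_m),\mathbf{y})}<\varepsilon$ for all $u\in\sV$ and $\mathbf{y}\in\sK_2$. A routine affine rescaling --- absorbed into the truncation neurons and the first-layer biases exactly as in the proof of Theorem~\ref{thm.truncateUAT} --- lets us assume $\sK\subset(0,1)^{m+n}$ when the hypotheses of those theorems require it.

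The delicate part, and the step I expect to be the main obstacle, is the first one: extracting a clean $\sigma$-free sensor reduction and checking that the compactness arguments survive in the infinite-dimensional space $\mathcal{C}(\sK_1)$. In particular one must (i) get $\sup_{u\in\sV}\norm{T(E(u))-u}_\infty\to0$ under refinement of the net from equicontinuity of $\sV$; (ii) make sense of evaluating $G$, which is given only on $\sV$, at the reconstructed functions $T(E(u))$ --- either via a Dugundji-type continuous extension of $G$ to all of $\mathcal{C}(\sK_1)$, or by a Lebesgue-number argument upgrading the continuity of $G$ on the compact set $\sV$ to a uniform statement on a $\delta$-neighborhood of $\sV$; and (iii) confirm that $f(\mathbf{a},\mathbf{y})=G(T(\mathbf{a}))(\mathbf{y})$ is jointly continuous on $\R^{m+n}$, which follows since it is a composition of the bounded linear map $T$, the continuous operator $G$, and the jointly continuous evaluation on $\mathcal{C}(\sK_2)\times\sK_2$. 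Once $f$ is in hand, Theorems~\ref{thm.truncateUAT} and~\ref{thm.truncateUAT2} are applied verbatim.
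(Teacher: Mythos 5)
Your argument is correct, and its second half coincides with the paper's: both proofs restrict attention to the compact set obtained as the image of $\sV$ under the evaluation map (the paper uses $\prod_j \phi_j(\sV)$ via Tychonoff, you use $E(\sV)$ directly, which is equivalent for this purpose), apply Theorem~\ref{thm.truncateUAT} (resp.\ Theorem~\ref{thm.truncateUAT2}) to the resulting continuous function of $m+n$ variables with error $\varepsilon/2$, and finish with the triangle inequality. Where you diverge is in how that intermediate function is produced. The paper does not redo any sensor reduction: it simply invokes Theorem~5 of~\cite{chen} with an \emph{auxiliary} activation $g$ having the density property, chosen independently of $\sigma$, to get $H$ with $\abs{G(u)(\mathbf{y})-H(u(x_1),\ldots,u(x_m),\mathbf{y})}<\varepsilon/2$; since $m$ and the sensors come from that application with $g$ fixed, the $\sigma$-independence of $m$ is immediate rather than something ``to be extracted from that proof,'' as your parenthetical suggests. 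You instead rebuild the reduction from scratch: a $\delta$-net of $\sK_1$, a partition-of-unity reconstruction $T$, equicontinuity of the compact set $\sV$ to make $\sup_{u\in\sV}\norm{T(E(u))-u}_\infty$ small, and $f(\mathbf{a},\mathbf{y})=G(T(\mathbf{a}))(\mathbf{y})$. This is essentially re-deriving the core of Chen--Chen's argument; it buys you a self-contained proof in which the $\sigma$-freeness of $m$ is true by construction and no black box is needed, at the cost of extra analytic work that the paper's route avoids. One small repair to your step (ii): the Dugundji-type extension and the Lebesgue-number/uniform-modulus argument are not alternatives but are both needed --- the extension is required merely to \emph{define} $f$ at the reconstructed points $T(\mathbf{a})\notin\sV$, while the uniform-modulus argument on a fixed $r$-neighborhood of $\sV$ (with $r$ chosen before $\delta$, to avoid the circularity that $T(E(\sV))$ depends on the net) is what makes $\abs{G(u)(\mathbf{y})-f(E(u),\mathbf{y})}<\varepsilon/2$ uniform over $\sV$. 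With that adjustment your construction goes through, and the remainder of your proof matches the paper's verbatim.
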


\begin{proof}
Define $\sU_j = \{u(x_j) \mid u \in \sV\}$ and $\sU = \prod_{j=1}^m \sU_j$. The evaluation map $\phi_j: \sV \rightarrow \R, u \mapsto u(x_j)$ is continuous. Hence, $\sU_j = \phi_j(\sV)$ is compact for each $j$, and so is $\sU$ by Tychonoff's Theorem. Let $g: \R \rightarrow \R$ be an arbitrary function with the density property.  This function induces points $x_1, \ldots, x_m \in \sK_1$ and a function $H: \sU \times \sK_2 \rightarrow \R$ such that 
\[
\abs{G(u)(\mathbf{y}) - H(u(x_1), \ldots, u(x_m), \mathbf{y})} < \varepsilon / 2
\]
for any $u \in \sV$ and $\mathbf{y} \in \sK_2$ (Theorem~5 in~\cite{chen}). Let $F: \R^{m+n} \rightarrow \R$ be the function represented by the NN with truncated inputs constructed in Theorem~\ref{thm.truncateUAT} or Theorem~\ref{thm.truncateUAT2} associated with the function $H$ and the approximation error $\varepsilon / 2$. The statement of the theorem follows from the triangle inequality and the fact that $g$ is arbitrary, making $m$ independent of $\delta$.
\end{proof}

Compared to Theorem~\ref{thm.deepopnet1}, Theorem~\ref{thm.main} gives us a deep operator NN whose width is constant. Moreover, it allows us to use non-affine polynomial activation functions, which are known to be powerless in approximating using the 2-layer networks~\cite{pinkus}. Inspired by Proposition~4.17 of~\cite{kidger}, we have the following extension of Theorem~\ref{thm.main}.

\begin{corollary}\label{cor.oneless}
Let $\sigma: \R \to \R$ be a polynomial such that $\sigma^\prime(\alpha) = 0$ and $\sigma^{\prime\prime}(\alpha) \neq 0$ for some $\alpha \in \R$. Then, for every $\varepsilon > 0$, there exist points $x_1, \ldots, x_m \in \sK_1$ and a function $F: \R^{m+n} \rightarrow \R$ represented by a $\sigma$-activated NN with truncated inputs of width 5, such that 
\[
\abs{G(u)(\mathbf{y}) - F(u(x_1), \ldots, u(x_m), \mathbf{y})} < \varepsilon, \qquad u \in \sV, \quad \mathbf{y} \in \sK_2.
\]
\end{corollary}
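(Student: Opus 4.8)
The plan is to imitate the proof of Theorem~\ref{thm.main}, but to feed the output of Chen's wide-network theorem into a width-$5$ encode--decode construction rather than the width-$6$ one of Theorem~\ref{thm.truncateUAT2}; the extra ingredient is that a polynomial $\sigma$ with $\sigma'(\alpha)=0$ and $\sigma''(\alpha)\neq 0$ lets a single $\sigma$-neuron realise (an approximation of) the squaring map, which saves one neuron throughout. Concretely, I would first fix an auxiliary function $g\colon\R\to\R$ with the density property and apply Theorem~5 of~\cite{chen}: this produces sensors $x_1,\dots,x_m\in\sK_1$, with $m$ depending only on $G$, $\varepsilon$, and the choice of $g$ --- hence independent of $\sigma$ --- together with a continuous map $H$ on the compact set $\sU\times\sK_2\subset\R^{m+n}$ (where $\sU=\prod_j\{u(x_j)\mid u\in\sV\}$ is compact by Tychonoff) such that $\abs{G(u)(\mathbf y)-H(u(x_1),\dots,u(x_m),\mathbf y)}<\varepsilon/2$ for all $u\in\sV$, $\mathbf y\in\sK_2$. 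It then suffices to build a $\sigma$-activated NN with truncated inputs, of width $5$, that approximates $H$ to within $\varepsilon/2$ on $\sU\times\sK_2$.

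To build that network I would run the argument of Theorem~\ref{thm.truncateUAT2} with Proposition~4.17 of~\cite{kidger} --- the polynomial analogue tailored to activations with a nondegenerate critical point --- in place of Proposition~4.11. First approximate $H$ uniformly on its compact domain, to within $\varepsilon/4$, by a polynomial $P=\sum_j P_j$ with monomials $P_j$. Since $\sigma$ is a polynomial with $\sigma'(\alpha)=0$ and $\sigma''(\alpha)\neq0$, the second-order Taylor expansion $\sigma(\alpha+\delta t)=\sigma(\alpha)+\tfrac12\sigma''(\alpha)\,\delta^2 t^2+O(\delta^3)$ shows that one $\sigma$-neuron, with suitably scaled affine pre- and post-composition, approximates $t\mapsto t^2$ uniformly on any compact set as $\delta\to0$; each monomial $P_j$ is then assembled from squarings via the polarization identity $ab=\tfrac14\bigl((a+b)^2-(a-b)^2\bigr)$. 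This yields an $(m+n,2)$-register-compute NN $h$ --- one $\sigma$-activated computation neuron and one $\sigma$-activated augmentation neuron --- with $\abs{h-H}<\varepsilon/4$ on the domain, restructured (as in Theorem~\ref{thm.truncateUAT2}) so that each computation neuron reads at most one register per layer and so that, in a layer where the computation neuron consumes a register value, the remaining computation position from the previous layer carries an unused output.

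Next I would perform the encoding--decoding surgery exactly as in Theorems~\ref{thm.truncateUAT} and~\ref{thm.truncateUAT2}: choose a large $\kappa\in\mathbb N$, set $\tilde x_j=\floor{10^\kappa x_j}$, encode the truncated inputs into one register as $r=\sum_{j=1}^n 10^{-j\kappa}\tilde x_j$, define decoder functions $\varphi_1,\dots,\varphi_n$ by the Tietze-extension construction so that $\varphi_j(r)=10^{-\kappa}\tilde x_j\approx x_j$, and approximate each $\varphi_j(r)$ using two $\sigma$-activated decoder neurons together with the register --- again admissible via Proposition~4.17. The crucial count: the decoder occupies two $\sigma$-neurons and the computation unit occupies two $\sigma$-neurons, but in the layers where the computation neuron consumes a decoded value the one idle computation position may be reused as a decoder neuron, so the decoder-plus-computation block occupies $2+2-1=3$ neurons; with the single register and the single augmentation neuron this gives width $5$. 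Taking $\kappa$ large enough that the truncation error, the register's approximate-identity error (in particular $\abs{a-\tilde a}<10^{-n\kappa-1}$, so that $\varphi_j(\tilde a)\equiv\varphi_j(a)$ throughout the network), the decoder error, and the value-propagation errors are each below the appropriate fraction of $\varepsilon$, and then applying the triangle inequality against $H$, $P$, $h$, and $G(u)(\mathbf y)$, yields $\abs{G(u)(\mathbf y)-F(u(x_1),\dots,u(x_m),\mathbf y)}<\varepsilon$ with $m$ independent of $\sigma$.

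The main obstacle I anticipate is the neuron-reuse bookkeeping: in Theorem~\ref{thm.truncateUAT2} two of the three computation neurons are free while the decoder operates, which is why that construction lands at width $6$, whereas here I must re-derive the activity pattern of the computation unit and verify that with only \emph{two} computation neurons exactly \emph{one} position is free in the decode layers, so that the combined block is $3$ neurons and the total is $5$. A secondary, more routine difficulty is checking that the several nested approximations --- squaring via the Taylor remainder, the Kidger approximate-identity maps in the register and augmenter, the decoder approximations, the polynomial approximation of $H$, and Chen's approximation of $G$ --- can all be driven below their respective budgets by a single admissible choice of $\kappa$ and of the squaring scale parameters; this is bookkeeping of exactly the sort already carried out in the proofs of Theorems~\ref{thm.truncateUAT} and~\ref{thm.truncateUAT2}.
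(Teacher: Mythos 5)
Your proposal takes essentially the same route as the paper's proof: apply Theorem~5 of~\cite{chen} to reduce to uniformly approximating $H$, then rerun the truncated-input encode--decode construction of Theorem~\ref{thm.truncateUAT2} with Proposition~4.17 of~\cite{kidger} in place of Proposition~4.11, reusing an idle computation neuron while the decoder is active to arrive at the same count $2+2+2-1=5$. The only slip is internal: you first describe $h$ as an $(m+n,2)$-register-compute NN with a single computation neuron, but your width bookkeeping (correctly, and as in the paper, where $h$ has two computation neurons plus the augmenter) uses two computation positions.
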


\begin{proof}
The proof follows from Theorem~\ref{thm.truncateUAT2}. The NN $h$ in Theorem~\ref{thm.truncateUAT2} can be implemented using a $(m,3)$-register-compute NN. Two neurons can implement the decoder in $g$. When the decoder is activated, it uses one of the two computation neurons. The rest of the proof is then analogous to the proof of Theorem~\ref{thm.main} and the width of $g$ is $2+2+2-1 = 5$, where the first ``2" corresponds to the register and the output augmenter. The second and the third ``2"s are the number of neurons needed to implement the decoder and the number of computation neurons, respectively. 
\end{proof}


Corollary~\ref{cor.oneless} in combination with the non-polynomial $\sigma$ case means that ``most" activation functions require our NN with truncated inputs to have a width of 5. This is a slight improvement compared to Theorem~\ref{thm.main}, in which we require a width of 6 when $\sigma$ is a non-affine polynomial.

\section{Conclusion}
This paper proves that arbitrary-depth operator NNs with a large class of activation functions are universal approximators. Our main theorem is a UAT for operator NNs of width 5 with a non-polynomial that is continuously differentiable at a point with nonzero derivative (see Theorem~\ref{thm.main}). Our proof technique is robust enough to handle non-affine polynomial activation functions too (see Theorem~\ref{thm.truncateUAT2} and Corollary~\ref{cor.oneless}). We also construct an operator ReLU NN of depth $2k^3+8$ and constant width that cannot be well-approximated by any operator ReLU NN of depth $k$, unless its width is exponential in $k$ (see Theorem~\ref{telgarsky}). This demonstrates that deep, narrow NNs are better than shallow, wide ones at approximating certain continuous nonlinear operators. We hope that this adds theoretical justification to those that use deep operator NNs.

\subsubsection*{Acknowledgments}

We are thankful for the NSF RTG grant no.~1645643 that partially supported this research. We are also grateful for the NSF grant no. DMS-2045646 and the NSF-GRFP grant no. DGE-1650441. 

\bibliography{arXiv_operatorNN}
\bibliographystyle{siam}


\end{document}